\newcommand{\Dcal}{\mathcal{D}}
\newcommand{\Ecal}{\mathcal{E}}
\newcommand{\Gcal}{\mathcal{G}}
\newcommand{\Lcal}{\mathcal{L}}
\newcommand{\Vcal}{\mathcal{V}}
\newcommand{\Hcal}{\mathcal{H}}
\newcommand{\Sbb}{\mathbb{S}}
\title{Learning Deep Mean Field Games for Modeling Large Population Behavior}
\author{Jiachen Yang\textsuperscript{\normalfont 1}, Xiaojing Ye\textsuperscript{\normalfont 2}, Rakshit Trivedi\textsuperscript{\normalfont 1}, Huan Xu\textsuperscript{\normalfont 1} \& Hongyuan Zha\textsuperscript{\normalfont 1} \\
\texttt{yjiachen@gmail.com,
xye@gsu.edu,
rstrivedi@gatech.edu,}\\
\texttt{huan.xu@isye.gatech.edu,
zha@cc.gatech.edu}\\
\textsuperscript{1}Georgia Institute of Technology\\
\textsuperscript{2}Georgia State University
}
\begin{document}

\maketitle

\begin{abstract}
We consider the problem of representing collective behavior of large populations and predicting the evolution of a population distribution over a discrete state space.
A discrete time mean field game (MFG) is motivated as an interpretable model founded on game theory for understanding the aggregate effect of individual actions and predicting the temporal evolution of population distributions. 
We achieve a synthesis of MFG and Markov decision processes (MDP) by showing that a special MFG is reducible to an MDP.
This enables us to broaden the scope of mean field game theory and infer MFG models of large real-world systems via deep inverse reinforcement learning.
Our method learns both the reward function and forward dynamics of an MFG from real data, and we report the first empirical test of a mean field game model of a real-world social media population.
\end{abstract}

\section{Introduction}

\begin{quote}
  Nothing takes place in the world whose meaning is not that of some maximum or minimum.
  \attrib{Leonhard Euler}
\end{quote}
Major global events shaped by large populations in social media, such as the Arab Spring, the Black Lives Matter movement, and the fake news controversy during the 2016 U.S. presidential election, provide significant impetus for devising new models that account for macroscopic population behavior resulting from the aggregate decisions and actions taken by all individuals 
\citep{howard2011,anderson2016,buzzfeed2016}.
Just as physical systems behave according to the principle of least action, to which Euler's statement alludes, population behavior consists of individual actions that may be optimal with respect to some objective.
The increasing usage of social media in modern societies lends plausibility to this hypothesis \citep{perrin2015}, since the availability of information enables individuals to plan and act based on their observations of the global population state.
For example, a population's behavior directly affects the ranking of a set of trending topics on social media, represented by the global population distribution over topics, while each user's observation of this global state influences their choice of the next topic in which to participate, thereby contributing to future population behavior \citep{twitter2017}.
In general, this feedback may be present in any system where the distribution of a large population over a state space is observable (or partially observable) by each individual, whose behavior policy generates actions given such observations.
This motivates multiple criteria for a model of population behavior that is learnable from real data:
\vspace{-2mm}
\begin{enumerate}[itemsep=1pt,leftmargin=*]
\item The model captures the dependency between population distribution and their actions.
\item It represents observed individual behavior as optimal for some implicit reward.
\item It enables prediction of future population distribution given measurements at previous times.
\end{enumerate}
\vspace{-2mm}
We present a mean field game (MFG) approach to address the modeling and prediction criteria.
Mean field games originated as a branch of game theory that provides tractable models of large agent populations, by considering the limit of $N$-player games as $N$ tends to infinity \citep{lasry2007}.
In this limit, an agent population is represented via their distribution over a state space, 
and each agent's optimal strategy is informed by a reward that is a function of the population distribution and their aggregate actions.
The stochastic differential equations that characterize MFG can be specialized to many settings: optimal production rate of exhaustible resources such as oil among many producers \citep{gueant2011}; optimizing between conformity to popular opinion and consistency with one's initial position in opinion networks \citep{bauso2016}; and the transition between competing technologies with economy of scale \citep{lachapelle2010}.
Representing agents as a distribution means that MFG is scalable to arbitrary population sizes, enabling it to simulate real-world phenomenon such as the Mexican wave in stadiums \citep{gueant2011}.

As the model detailed in Section~\ref{sec:mfg} will show, MFG naturally addresses the modeling criteria in our problem context while overcoming limitations of alternative predictive methods.
For example, time series analysis builds predictive models from data, but these models are incapable of representing any motivation (i.e. reward) that may produce a population's behavior policy.
Alternatively, methods that employ the underlying population network structure have assumed that nodes are only influenced by a local neighborhood,
do not account for a global state,
and may face difficulty in explaining events as the result of any implicit optimization.
\citep{farajtabar2015,de2016learning}.
MFG is unique as a descriptive model whose solution tells us how a system naturally behaves according to its underlying optimal control policy.
This observation enables us to draw a connection with the framework of Markov decision processes (MDP) and reinforcement learning (RL) \citep{sutton1998}.
The crucial difference from a traditional MDP viewpoint is that we frame the problem as \textit{MFG model inference via MDP policy optimization}: we use the MFG model to describe natural system behavior by solving an associated MDP, without imposing any control on the system.
MFG offers a computationally tractable framework for adapting inverse reinforcement learning (IRL) methods \citep{ng2000,ziebart2008maximum,finn2016guided}, with flexible neural networks as function approximators, to learn complex reward functions that may explain behavior of arbitrarily large populations.
In the other direction, RL enables us to devise a data-driven method for solving an MFG model of a real-world system for temporal prediction.
While research on the theory of MFG has progressed rapidly in recent years, with some examples of numerical simulation of synthetic toy problems, there is a conspicuous absence of scalable methods for empirical validation \citep{lachapelle2010,achdou2012mean,bauso2016}.
Therefore, while we show how MFG is well-suited for the specific problem of modeling population behavior, we also demonstrate a general data-driven approach to MFG inference via a synthesis of MFG and MDP.

Our main contributions are the following. 
We propose a data-driven approach to learn an MFG model along with its reward function, showing that research in MFG need not be confined to toy problems with artificial reward functions.
Specifically, we derive a discrete time graph-state MFG from general MFG and provide detailed interpretation in a real-world setting (Section~\ref{sec:mfg}). 
Then we prove that a special case can be reduced to an MDP and show that finding an optimal policy and reward function in the MDP is equivalent to inference of the MFG model (Section~\ref{sec:solution}). 
Using our approach, we empirically validate an MFG model of a population's activity distribution on social media, achieving significantly better predictive performance compared to baselines (Section~\ref{sec:experiment}).
Our synthesis of MFG with MDP has potential to open new research directions for both fields.

\section{Related work}
Mean field games originated in the work of \citet{lasry2007}, and independently as stochastic dynamic games in \citet{huang2006}, both of which proposed mean field problems in the form of differential equations for modeling problems in economics and analyzed the existence and uniqueness of solutions.
\citet{gueant2011} provided a survey of MFG models and discussed various applications in continuous time and space, such as a model of population distribution that informed the choice of application in our work.  
Even though the MFG framework is agnostic towards the choice of cost function (i.e. negative reward), prior work make strong assumptions on the cost in order to attain analytic solutions. We take a view that the dynamics of any game is heavily impacted by the reward function, and hence we propose methods to learn the MFG reward function from data.

Discretization of MFGs in time and space have been proposed \citep{gomes2010discrete,achdou2012mean,gueant2015}, serving as the starting point for our model of population distribution over discrete topics; while these early work analyze solution properties and lack empirical verification, we focus on algorithms for attaining solutions in real-world settings.
Related to our application case, prior work by \citet{bauso2016} analyzed the evolution of opinion dynamics in multi-population environments, but they imposed a Gaussian density assumption on the initial population distribution and restrictions on agent actions, both of which limit the generality of the model and are not assumed in our work.
There is a collection of work on numerical finite-difference methods for solving continuous mean field games \citep{achdou2012mean,lachapelle2010,carlini2014fully}.
These methods involve forward-backward or Newton iterations that are sensitive to initialization and have inherent computational challenges for large real-valued state and action spaces, which limit these methods to toy problems and cannot be scaled to real-world problems.
We overcome these limitations by showing how the MFG framework enables adaptation of RL algorithms that have been successful for problems involving unknown reward functions in large real-world domains.

In reinforcement learning, there are numerous value- and policy-based algorithms employing deep neural networks as function approximators for solving MDPs with large state and action spaces \citep{mnih2013playing,silver2014deterministic,lillicrap2015continuous}.
Even though there are generalizations to multi-agent settings \citep{hu1998,littman2001,lowe2017multi}, the MDP and Markov game frameworks do not easily suggest how to represent systems involving thousands of interacting agents whose actions induce an optimal trajectory through time.
In our work, mean field game theory is the key to framing the modeling problem such that RL can be applied.

Methods in unknown MDP estimation and inverse reinforcement learning aim to learn an optimal policy while estimating an unknown quantity of the MDP, such as the transition law \citep{burnetas1997optimal}, secondary parameters \citep{budhiraja2012action}, and the reward function \citep{ng2000}.
The maximum entropy IRL framework has proved successful at learning reward functions from expert demonstrations \citep{ziebart2008maximum,boularias2011relative,kalakrishnan2013learning}.
This probabilistic framework can be augmented with deep neural networks for learning complex reward functions from demonstration samples \citep{wulfmeier2015maximum,finn2016guided}.
Our MFG model enables us to extend the sample-based IRL algorithm in \citet{finn2016guided} to the problem of learning a reward function under which a large population's behavior is optimal, and we employ a neural network to process MFG states and actions efficiently.

\section{Mean field games}
\label{sec:mfg}
We begin with an overview of a continuous-time mean field games over graphs, and derive a general discrete-time graph-state MFG \citep{gueant2015}. 
Then we give a detailed presentation of a discrete-time MFG over a complete graph, which will be the focus for the rest of this paper.

\subsection{Mean field games on graphs}
\label{subsec:MFGnetwork}

Let $\Gcal=(\Vcal,\Ecal)$ be a directed graph, where the vertex set $\Vcal=\{1,\dots,d\}$ represents $d$ possible states of each agent, and $\Ecal\subseteq \Vcal\times \Vcal$ is the edge set consisting of all possible direct transition between states (i.e., a agent can hop from $i$ to $j$ only if $(i,j)\in \Ecal$).
For each node $i\in\Vcal$, define $\Vcal_i^+:=\{j:(j,i)\in \Ecal\}$, $\Vcal_i^-:=\{j:(i,j)\in E\}$, and $\bar{\Vcal}_i^+:=\Vcal_i^+\cup\{i\}$ and $\bar{\Vcal}_i^-:=\Vcal_i^-\cup\{i\}$.
Let $\pi_i(t)$ be the density (proportion) of agent population in state $i$ at time $t$, and $\pi(t):=(\pi_1(t),\dots,\pi_d(t))$.
Population dynamics are generated by right stochastic matrices $P(t) \in \Sbb(\Gcal)$, where $\mathbb{S}(\Gcal) := \Sbb_1(\Gcal)\times \cdots \times \Sbb_d(\Gcal)$ and each row $P_i(t)$ belongs to $\Sbb_i(\Gcal):=\{p\in \Delta^{d-1}\ \vert\ \text{supp}(p)\subset \bar{\Vcal}_i^-\}$ where $\Delta^{d-1}$ is the simplex in $\mathbb{R}^d$.
Moreover, we have a value function $V_i(t)$ of state $i$ at time $t$, and a reward function $r_i(\pi(t),P_i(t))$
\footnote{We here consider a rather special formulation 
where the reward function $r_i$ only depends on the overall population distribution
$\pi(t)$ and the choice $P_i$ the players in state $i$ made.} , quantifying the instantaneous reward for agents in state $i$ taking transitions with probability $P_i(t)$ when the current distribution is $\pi(t)$.
We are mainly interested in a discrete time graph state MFG, which is derived from a continuous time MFG by the following proposition.
Appendix~\ref{appendix:derivation} provides a derivation from the continuous time MFG.
\begin{proposition}
\label{prop:discretization}
Under a semi-implicit discretization scheme with unit time step labeled by $n$, the backward Hamilton-Jacobi-Bellman (HJB) equation and the forward Fokker-Planck equation for each $i \in \lbrace 1,\dotsc,d \rbrace$ and $n = 0,\dotsc,N-1$ in a discrete time graph state MFG are given by:
\begin{align}
(\text{HJB})\qquad & \  & V_i^{n}\ &= \max\nolimits_{P_i^n \in \Sbb_i(\Gcal)} 
\left\{r_i(\pi^n,P^n_i)+\sum\nolimits_{j\in \bar{\Vcal}_i^{-}} P_{ij}^n V_j^{n+1}\right\} 
\\
(\text{Fokker-Planck})& \  & \pi_i^{n+1} & =\sum\nolimits_{j\in \bar{\Vcal}_i^+}P_{ji}^{n}\pi_j^{n} 
\end{align}
\end{proposition}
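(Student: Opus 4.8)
The plan is to start from the continuous-time MFG system on the graph $\Gcal$ and apply the finite-difference scheme term by term, showing that the semi-implicit choice of time indices is exactly what produces the backward coupling in the value equation and the forward coupling in the density equation. First I would recall the continuous-time system in generator form: the value function obeys a backward HJB equation $-\dot V_i = \max_{P_i}\{ r_i(\pi,P_i) + \sum_{j\neq i} a_{ij}(V_j - V_i)\}$ driven by an infinitesimal jump-rate matrix $(a_{ij})$ supported on the out-edges $\bar{\Vcal}_i^-$, while the density obeys the forward Fokker--Planck (Kolmogorov forward) equation $\dot\pi_i = \sum_{j\neq i} a_{ji}\pi_j - (\sum_{j\neq i} a_{ij})\pi_i$, the adjoint flow of the same generator.

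For the HJB step I would replace $\dot V_i$ by the unit-step difference $V_i^{n+1}-V_i^{n}$ and evaluate every value term implicitly at level $n+1$, which is the semi-implicit convention. Rearranging $V_i^n - V_i^{n+1} = \max\{\cdots\}$ and folding the diagonal contribution $-(\sum_{j\neq i} a_{ij})V_i^{n+1}$ together with the explicit $V_i^{n+1}$ yields a coefficient $1 - \sum_{j\neq i} a_{ij}$ on $V_i^{n+1}$, so that $V_i^n = \max\{ r_i(\pi^n, P_i^n) + \sum_{j\in\bar{\Vcal}_i^-} P_{ij}^n V_j^{n+1}\}$ once I set $P_{ij}^n := a_{ij}$ off-diagonal and $P_{ii}^n := 1 - \sum_{j\neq i} a_{ij}$. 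The key is to check that these coefficients define a genuine element of $\Sbb_i(\Gcal)$: they are nonnegative under the unit-step condition that the total outflow rate does not exceed one, sum to one by construction, and vanish outside $\bar{\Vcal}_i^-$ because the generator respects the edge set $\Ecal$.

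For the Fokker--Planck step I would discretize $\dot\pi_i$ by the forward difference $\pi_i^{n+1}-\pi_i^n$ and keep the densities at the explicit level $n$, reusing the same $P^n$ extracted from the value equation. Substituting $P_{ji}^n$ for the rates and regrouping the self-term $\pi_i^n$ with $-(\sum_{j\neq i} a_{ij})\pi_i^n$ collapses the update to $\pi_i^{n+1} = \sum_{j\in\bar{\Vcal}_i^+} P_{ji}^n \pi_j^n$, that is $\pi^{n+1} = (P^n)^\top \pi^n$, the pushforward of the current distribution by the transition matrix; the index set $\bar{\Vcal}_i^+$ appears precisely because only in-edges $(j,i)\in\Ecal$ (and the self-loop) contribute inflow to state $i$.

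The main obstacle I anticipate is not any single calculation but the consistent bookkeeping of time indices and signs that makes the scheme semi-implicit in the right way: one must verify that evaluating the value terms at $n+1$ while holding the densities at $n$ is simultaneously compatible with the backward direction of the HJB recursion and the forward direction of the Fokker--Planck recursion, so that one and the same stochastic matrix $P^n$ serves both equations. Establishing the stochasticity and support of $P^n$, rather than merely writing down the differences, is what turns the formal discretization into the well-defined, MDP-compatible system claimed in the statement; the detailed term-by-term computation is deferred to Appendix~\ref{appendix:derivation}.
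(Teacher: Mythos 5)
Your proposal is correct and takes essentially the same route as the paper's derivation in Appendix~\ref{appendix:derivation}: a semi-implicit difference for the backward HJB (value terms at level $n+1$, with the diagonal contribution folding into $V_i^{n+1}$ so that it cancels upon rearrangement), an explicit Euler step for the forward Fokker--Planck equation, and the identification $P^n = I + Q^n$ of the discrete transition matrix with the shifted generator, with one and the same $P^n$ serving both equations. The only cosmetic difference is that the paper parameterizes the continuous-time control directly by the stochastic matrix $P_i(t) \in \Sbb_i(\Gcal)$ (feasibility enforced by setting $r_i = -\infty$ off $\Sbb_i(\Gcal)$), so the cancellation of $V_i^{n+1}$ follows from $\sum_j P^n_{ij} = 1$ with no step-size restriction, whereas your jump-rate parameterization $a_{ij}$ requires the unit-step outflow condition $\sum_{j \neq i} a_{ij} \leq 1$ for $P^n_{ii} \geq 0$ --- a condition you correctly identify and that the paper's formulation sidesteps.
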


\subsection{Discrete time MFG over complete graph}
\label{sec:dtmfg}

Proposition~\ref{prop:discretization} shows that a discrete time MFG given in \citet{gomes2010discrete} can be seen as a special case of a discrete time graph state MFG with a \textit{complete} graph (such that $\mathbb{S}(\Gcal) = \Delta^{d-1}\times\dots\times \Delta^{d-1}$ ($d$ of $\Delta^{d-1}$)).
We focus on the complete graph in this paper, as the methodology can be readily applied to general directed graphs.
While Section~\ref{sec:solution} will show a connection between MFG and MDP, we note here that a ``state'' in the MFG sense is a node in $\Vcal$ and not an MDP state. 
\footnote{Section~\ref{sec:solution} explains that the population distribution $\pi$ is the appropriate definition of an MDP state.}
We now interpret the model using the example of evolution of user activity distribution over topics on social media, to provide intuition and set the context for our real-world experiments in Section~\ref{sec:experiment}.
Independent of any particular interpretation, the MFG approach is generally applicable to any problem where population size vastly outnumbers a set of discrete states.

\begin{itemize}[leftmargin=*]
\item \textbf{Population distribution} $\pi^n \in \Delta^{d-1}$ for $n=0,\dotsc,N$.
  Each $\pi^n$ is a discrete probability distribution over $d$ topics, where $\pi^n_i$ is the fraction of people who posted on topic $i$ at time $n$. 
  Although a person may participate in more than one topic within a time interval, normalization can be enforced by a small time discretization or by using a notion of ``effective population size'', defined as population size multiplied by the max participation count of any person during any time interval.
  $\pi^0$ is a given initial distribution.
\item \textbf{Transition matrix} $P^n \in \Sbb(\Gcal)$. $P^n_{ij}$ is the probability of people in topic $i$ switching to topic $j$ at time $n$, so we refer to $P^n_i$ as the \textit{action} of people in topic $i$. $P^n$ generates the forward equation
\vspace{-1mm}
\begin{align}\label{eq:forward_dtmfg}
 \pi_j^{n+1} = \sum_{i=1}^d P_{ij}^n \pi_i^n
\end{align}
\vspace{-2mm}
\item \textbf{Reward} $r_i(\pi^n, P_i^n) := \sum_{j=1}^d P^n_{ij} r_{ij}(\pi^n, P_i^n)$, for $i \in \lbrace 1,\dotsc,d \rbrace$. 
This is the reward received by people in topic $i$ who choose action $P_i^n$ at time $n$, when the distribution is $\pi^n$.
In contrast to previous work, we learn the reward function from data (Section~\ref{sec:IRL}).
We make a \textit{locality assumption}: reward for $i$ depends only on $P^n_i$, not on the entire $P^n$, which means that actions by people in $j \neq i$ have no instantaneous effect on the reward for people in topic $i$.
\footnote{If this assumption is removed, there is a resemblance between the discrete time MFG and a Markov game in a continuous state and continuous action space \citep{littman2001,hu1998}. However, it turns out that the general MFG is a strict generalization of a multi-agent MDP (Appendix~\ref{appendix:comparison}).}
\item \textbf{Value function} $V^n \in \mathbb{R}^d$.
  $V^n_i$ is the expected maximum total reward of being in topic $i$ at time $n$.
  A terminal value $V^{N}$ is given, which we set to zero to avoid making any assumption on the problem structure beyond what is contained in the learned reward function.
\item \textbf{Average reward} $e_i(\pi, P, V)$, for $i \in \lbrace 1, \dotsc, d \rbrace$ and $V \in \mathbb{R}^d$ and $P \in \Sbb(\Gcal)$. This is the average reward received by agents at topic $i$ when the current distribution is $\pi$, action $P$ is chosen, and the subsequent expected maximum total reward is $V$. For a general $r_{ij}(\pi,P)$, it is defined as:
\vspace{-1mm}
\begin{align}
 e_i(\pi, P, V) = \sum_{j=1}^d P_{ij} ( r_{ij}(\pi, P) + V_j)
\end{align}
\vspace{-3mm}
  Intuitively, agents want to act optimally in order to maximize their expected total average reward.
\end{itemize}

For $P \in \Sbb(\Gcal)$ and a vector $q \in \Sbb_i(\Gcal)$, define $\mathcal{P}(P, i, q)$ to be the matrix equal to $P$, except with the $i$-th row replaced by $q$.
Then a \textit{Nash maximizer} is defined as follows:
\begin{definition}\label{def:nash}
  A right stochastic matrix $P \in \Sbb(\Gcal)$ is a \textit{Nash maximizer} of $e(\pi, P, V)$ if, given a fixed $\pi \in \Delta^{d-1}$ and a fixed $V \in \mathbb{R}^d$, there is
  \begin{align}
    e_i(\pi, P, V) &\geq e_i(\pi, \mathcal{P}(P, i, q), V)
  \end{align}
for any $i \in \lbrace 1, \dotsc, d \rbrace$ and any $q \in \Sbb_i(\Gcal)$.
\end{definition}
The rows of $P$ form a Nash equilibrium set of actions, since for any topic $i$, the people in topic $i$ cannot increase their reward by unilaterally switching their action from $P_i$ to any $q$.
Under Definition~\ref{def:nash}, the value function of each topic $i$ at each time $n$ satisfies the optimality criteria:
\begin{align}\label{eq:optimality_criteria}
V_i^n = \max_{q \in \Sbb_i(\Gcal)} \biggl\lbrace \sum_{j=1}^d q_j \left[ r_{ij}(\pi^n, \mathcal{P}(P^n, i, q)) + V_j^{n+1} \right] \biggr\rbrace
\end{align}
A solution of the MFG is a sequence of pairs $\lbrace (\pi^n, V^n)\rbrace_{n=0,\dotsc,N}$ satisfying optimality criteria~\eqref{eq:optimality_criteria} and forward equation~\eqref{eq:forward_dtmfg}.

\section{Inference of MFG via MDP optimization}
\label{sec:solution}

A Markov decision process is a well-known framework for optimization problems.
We focus on the discrete time MFG in Section~\ref{sec:dtmfg} and prove a reduction to a finite-horizon deterministic MDP, whose state trajectory under an optimal policy coincides with the forward evolution of the MFG.
This leads to the essential insight that solving the \textit{optimization} problem of an MDP is equivalent to solving an MFG that \textit{describes} population behavior.
This connection will enable us to apply efficient inverse RL methods, using measured population trajectories, to learn an MFG model along with its reward function in Section~\ref{sec:IRL}.
The MDP is constructed as follows:
\begin{definition}\label{def:mdp}
A finite-horizon deterministic MDP for a discrete time MFG over a complete graph is defined as:
\begin{itemize}[itemsep=0.5pt,leftmargin=*]
  \item States: $\pi^n \in \Delta^{d-1}$, the population distribution at time $n$.
\item Actions: $P^n \in \Sbb(\Gcal)$, the transition probability matrix at time $n$.
  \item Reward: $R(\pi^n, P^n) := \sum_{i=1}^d \pi_i^n \sum_{j=1}^d  P^n_{ij} r_{ij}(\pi^n, P^n_i)$
  \item Finite-horizon state transition, given by Eq~\eqref{eq:forward_dtmfg}: $\forall n \in \lbrace 0,\dotsc,N-1\rbrace \colon \pi_j^{n+1} = \sum_{i = 1}^d P_{ij}^n \pi_i^n$.
\end{itemize}
\end{definition}

\begin{theorem}
The value function of a solution to the discrete time MFG over a complete graph defined by optimality criteria~\eqref{eq:optimality_criteria} and forward equation~\eqref{eq:forward_dtmfg} is a solution to the Bellman optimality equation of the MDP in Definition~\ref{def:mdp}.
\end{theorem}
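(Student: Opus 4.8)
The plan is to identify the MDP value function explicitly in terms of the MFG value function and then verify the Bellman optimality equation by direct substitution. The natural candidate is the $\pi$-weighted inner product $W^n(\pi) := \langle \pi, V^n\rangle = \sum_{i=1}^d \pi_i V_i^n$, where $\{V^n\}$ is the value sequence of the given MFG solution. Since the MDP of Definition~\ref{def:mdp} is finite-horizon and deterministic, its Bellman optimality equation reads $W^n(\pi^n) = \max_{P^n \in \Sbb(\Gcal)}\{R(\pi^n, P^n) + W^{n+1}(\pi^{n+1})\}$, with $\pi^{n+1}$ determined from $\pi^n$ and $P^n$ by the forward equation~\eqref{eq:forward_dtmfg}, and terminal condition $W^N(\pi) = \langle \pi, V^N\rangle = 0$ matching the MFG terminal value $V^N = 0$. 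So it suffices to show this candidate satisfies the recursion for each $n$.

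First I would expand the right-hand side. Substituting the MDP reward from Definition~\ref{def:mdp} and using~\eqref{eq:forward_dtmfg} to rewrite $W^{n+1}(\pi^{n+1}) = \sum_j \pi_j^{n+1} V_j^{n+1} = \sum_i \pi_i^n \sum_j P_{ij}^n V_j^{n+1}$, the two terms combine into the single double sum $\sum_{i=1}^d \pi_i^n \sum_{j=1}^d P_{ij}^n [r_{ij}(\pi^n, P_i^n) + V_j^{n+1}]$. The decisive structural step is then to interchange the maximization with the outer sum over $i$. This is legitimate because three ingredients align: the complete-graph assumption makes $\Sbb(\Gcal)$ a product $\Delta^{d-1}\times\cdots\times\Delta^{d-1}$, so the rows $P_i^n$ may be chosen independently; the locality assumption makes the $i$-th summand depend on $P^n$ only through its $i$-th row; and $\pi_i^n \geq 0$, so maximizing each row's contribution maximizes the non-negatively weighted sum. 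Hence the joint maximum equals $\sum_{i=1}^d \pi_i^n \max_{q \in \Sbb_i(\Gcal)} \sum_{j=1}^d q_j [r_{ij}(\pi^n, q) + V_j^{n+1}]$.

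Finally I would recognize each inner maximization as exactly the MFG optimality criterion~\eqref{eq:optimality_criteria}: by the locality assumption $r_{ij}(\pi^n, \mathcal{P}(P^n, i, q))$ equals $r_{ij}(\pi^n, q)$, so the per-row problem is precisely the one defining $V_i^n$, giving value $V_i^n$. Summing yields $\sum_i \pi_i^n V_i^n = \langle \pi^n, V^n\rangle = W^n(\pi^n)$, which is the Bellman optimality equation and completes the argument.

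I expect the row-decomposition of the maximum to be the main obstacle, or at least the step requiring the most care, since it is where the complete-graph product structure, the locality assumption, and non-negativity of the distribution must all be invoked simultaneously; without any one of them the joint max over $P^n$ would not separate and the equivalence would break. A secondary point worth checking is the boundary case $\pi_i^n = 0$, where the $i$-th row is unconstrained by the objective but the identity $\pi_i^n \max_q(\cdots) = 0$ still holds, so the rows assembled from the per-row maximizers indeed form a Nash maximizer in the sense of Definition~\ref{def:nash}.
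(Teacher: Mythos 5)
Your proposal is correct and follows essentially the same route as the paper's proof: both define the MDP value as the $\pi$-weighted sum $\sum_i \pi_i^n V_i^n$, use the locality assumption to reduce~\eqref{eq:optimality_criteria} to a per-row problem, and exchange the maximization over $P \in \Sbb(\Gcal)$ with the sum over $i$ (the paper's step from~\eqref{eq:reduction1} to~\eqref{eq:reduction2}) before substituting the forward equation. Your write-up is if anything slightly more careful, since you explicitly justify the max--sum interchange via the product structure of $\Sbb(\Gcal)$, nonnegativity of $\pi_i^n$, and the $\pi_i^n = 0$ boundary case, all of which the paper leaves implicit.
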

\begin{proof}
Since $r_{ij}$ depends on $P^n$ only through row $P_i^n$, optimality criteria~\ref{eq:optimality_criteria} can be written as
\begin{align}\label{eq:value_function_i}
  V_i^n &= \max_{P_i \in \Sbb_i(\Gcal)} \left\{ \sum_j P_{ij} r_{ij}(\pi^n, P_i) + \sum_j P_{ij} V_j^{n+1} \right\}.
\end{align}
We now define $V^*(\pi^n)$ as follows and show that it is the value function of the constructed MDP in Definition~\ref{def:mdp} by verifying that it satisfies the Bellman optimality equation:
\begin{align}
  V^*(\pi^n) := \sum_{i=1}^d \pi_i^n V_i^n 
           &= \sum_{i=1}^d \pi_i^n \max_{P_i \in \Sbb_i(\Gcal)} \biggl\lbrace \sum_{j=1}^d P_{ij} r_{ij}(\pi^n, P_i) + \sum_{j=1}^d P_{ij} V^{n+1}_j \biggr\rbrace \label{eq:reduction1}\\
           &= \max_{P \in \Sbb(\Gcal)} \biggl\lbrace \sum_{i=1}^d \pi_i^n \sum_{j=1}^d  P_{ij} r_{ij}(\pi^n, P_i) + \sum_{j=1}^d \left( \sum_{i=1}^d P_{ij} \pi_i^n \right) V^{n+1}_j \biggr\rbrace \label{eq:reduction2}\\
           &= \max_{P \in \Sbb(\Gcal)} \biggl\lbrace R(\pi^n, P) + \sum_{j=1}^d \pi^{n+1}_j V^{n+1}_j \biggr\rbrace \\
  		   &= \max_{P \in \Sbb(\Gcal)} \biggl\lbrace R(\pi^n,P) + V^*( \pi^{n+1}) \biggr\rbrace \label{eq:bellman_opt}
\end{align} 
which is the Bellman optimality equation for the MDP in Definition~\ref{def:mdp}.
\end{proof}

\begin{corollary}
Given a start state $\pi^0$, the state trajectory under the optimal policy of the MDP in Definition~\ref{def:mdp} is equivalent to the forward evolution part of the solution to the MFG.
\end{corollary}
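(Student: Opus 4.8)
The plan is to build directly on the Theorem, which already establishes that $V^*(\pi^n) = \sum_i \pi_i^n V_i^n$ satisfies the Bellman optimality equation of the MDP, and to trace carefully which action attains each Bellman maximum. The key fact I would extract from the Theorem's proof is that the equality between~\eqref{eq:reduction1} and~\eqref{eq:reduction2} — interchanging $\sum_i \pi_i^n \max_{P_i}$ with $\max_{P}\sum_i \pi_i^n$ — holds precisely because the bracketed objective for row $i$ depends on $P$ only through row $P_i$ (the locality assumption) and the weights $\pi_i^n$ are nonnegative. Consequently the matrix $P^n$ attaining the MDP Bellman maximum at state $\pi^n$ is exactly the matrix whose $i$-th row independently attains the MFG row-wise maximum in~\eqref{eq:value_function_i}; that is, the optimal MDP action coincides with the Nash maximizer of the MFG solution in the sense of Definition~\ref{def:nash}.

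Having identified the optimal action, I would then argue by induction on $n$ that the two trajectories agree. For the base case, the MDP is initialized at $\pi^0$, which is the same given initial distribution appearing in the MFG solution. For the inductive step, assume the MDP trajectory has reached the MFG state $\pi^n$ at time $n$. By the previous paragraph, an optimal policy selects the action $P^n$ equal to the MFG Nash maximizer at $\pi^n$. The crucial point is that the MDP's deterministic transition in Definition~\ref{def:mdp}, namely $\pi_j^{n+1} = \sum_i P_{ij}^n \pi_i^n$, is literally the MFG forward equation~\eqref{eq:forward_dtmfg}. Hence applying the optimal MDP action reproduces exactly the MFG successor state $\pi^{n+1}$, closing the induction.

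The main obstacle I anticipate is the argmax identification rather than the induction itself: one must be careful that ``the optimal policy'' is well defined, since the Bellman maximizer need not be unique. I would handle this by phrasing the claim as an equivalence up to a choice of optimal action — the MFG Nash maximizer is always \emph{an} optimal MDP action, and the trajectory it generates is \emph{a} (hence, under uniqueness, \emph{the}) optimal-policy trajectory. A second point deserving care is that the decoupling of the maximization relies on $\pi_i^n \ge 0$ for every $i$; since $\pi^n \in \Delta^{d-1}$ this holds, but if some coordinate vanishes the corresponding row of $P^n$ is unconstrained by optimality, which again affects only uniqueness and not the equivalence of the realized distributions. Once these are noted, the corollary follows immediately from the Theorem together with the observation that the MDP transition map and the MFG forward equation are the same.
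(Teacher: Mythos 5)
Your proof is correct and takes essentially the same route as the paper's: the paper's own (two-sentence) proof likewise notes that under the optimal policy equations~\eqref{eq:bellman_opt} and~\eqref{eq:reduction1} hold, so the action chosen at each $\pi^n$ is the Nash maximizer matrix, and the identity of the MDP transition with the forward equation~\eqref{eq:forward_dtmfg} then gives the trajectory equivalence. Your explicit decoupling argument (locality plus $\pi_i^n \geq 0$), the induction, and the caveats about non-unique maximizers and zero-mass rows only flesh out steps the paper leaves implicit, without changing the approach.
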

\begin{proof}
Under the optimal policy, equations~\ref{eq:bellman_opt} and~\ref{eq:reduction1} are satisfied, which means the matrix $P$ generated by the optimal policy at any state $\pi^n$ is the Nash maximizer matrix. Therefore, the state trajectory $\lbrace \pi^n \rbrace_{n=0,\dotsc,N}$ is the forward part of the MFG solution.
\end{proof}

\subsection{Reinforcement learning solution for MFG}
\label{sec:IRL}
MFG provides a general framework for addressing the problem of modeling population dynamics, while the new connection between MFG and MDP enables us to apply inverse RL algorithms to solve the MDP in Definition~\ref{def:mdp} with unknown reward.
In contrast to previous MFG research, most of which impose reward functions that are quadratic in actions and logarithmic in the state distribution \citep{gueant2009,lachapelle2010,bauso2016}, we learn a reward function using demonstration trajectories measured from actual population behavior,
to ground the MFG representation of population dynamics on real data.

We leverage the MFG forward dynamics (Eq~\ref{eq:forward_dtmfg}) in a sample-based IRL method based on the maximum entropy IRL framework \citep{ziebart2008maximum}.
From this probabilistic viewpoint, we minimize the relative entropy between a probability distribution $p(\tau)$ over a space of trajectories $T:=\lbrace \tau_i \rbrace_i$ and a distribution $q(\tau)$ from which demonstrated expert trajectories are generated \citep{boularias2011relative}.
This is related to a path integral IRL formulation, where the likelihood of measured optimal trajectories is evaluated only using trajectories generated from their local neighborhood, rather than uniformly over the whole trajectory space \citep{kalakrishnan2013learning}.
Specifically, making no assumption on the true distribution of optimal demonstration other than matching of reward expectation, we posit that demonstration trajectories $\tau_i = (\pi^0,P^1,\dotsc,\pi^{N-1},P^{N-1})_i$ are sampled from the maximum entropy distribution \citep{jaynes1957information}:
\begin{equation}
p(\tau) = \frac{1}{Z} \exp(R_{W}(\tau))
\end{equation}
where $R_{W}(\tau) = \sum_n R_{W}(\pi^n,P^n)$ is the sum of reward of single state-action pairs over a trajectory $\tau$, and $W$ are the parameters of the reward function approximator (derivation in Appendix~\ref{appendix:maxent}).
Intuitively, this means that trajectories with higher reward are exponentially more likely to be sampled.
Given $M$ sample trajectories $\tau_j \in \Dcal_{\text{samp}}$ from $k$ distributions $F_1(\tau),\dotsc,F_k(\tau)$, an unbiased estimator of the partition function $Z = \int \exp(R_{W}(\tau)) d\tau$ using multiple importance sampling is $\hat{Z} := \frac{1}{M} \sum_{\tau_j} z_j \exp(R_{W}(\tau_j))$ \citep{owen2000safe}, where importance weights are $z_j := \left[ \frac{1}{k} \sum_k F_k(\tau_j) \right]^{-1}$ (derivation in Appendix~\ref{appendix:MIS}).
Each action matrix $P$ is sampled from a stochastic policy $F_k(P; \pi, \theta)$ (overloading notation with $F(\tau)$), where $\pi$ is the current state and $\theta$ the policy parameter.
The negative log likelihood of $L$ demonstration trajectories $\tau_i \in \Dcal_{\text{demo}}$ is:
\begin{equation}\label{eq:loss}
\Lcal(W) = - \frac{1}{L} \sum_{\tau_i \in \Dcal_{\text{demo}}} R_{W}(\tau_i) + \log\left( \frac{1}{M} \sum_{\tau_j \in \Dcal_{\text{samp}}} z_j \exp(R_{W}(\tau_j)) \right)
\end{equation}
We build on Guided Cost Learning (GCL) in \citet{finn2016guided} (Alg~\ref{alg:GCL}) to learn a deep neural network approximation of $R_{W}(\pi, P)$ via stochastic gradient descent on $\Lcal(W)$, and learn a policy $F(P;\pi,\theta)$ using a simple actor-critic algorithm \citep{sutton1998}.
In contrast to GCL, we employ a combination of convolutional neural nets and fully-connected layers to process both the action matrix $P$ and state vector $\pi$ efficiently in a single architecture (Appendix~\ref{appendix:reward}), analogous to how \citet{lillicrap2015continuous} handle image states in Atari games.
Due to our choice of policy parameterization (described below), we also set importance weights to unity for numerical stability.
These implementation choices result in successful learning of a reward representation (Fig~\ref{fig:reward_distribution_heatmap}).

Our forward MDP solver (Alg~\ref{alg:ac}) performs gradient ascent on the policy's expected start value $\E[v(\pi^0)|F(P;\pi,\theta)]$ w.r.t. $\theta$, to find successively better policies $F_k(P;\pi,\theta)$.
We construct the joint distribution $F(P;\pi,\theta)$ informed by domain knowledge about human population behavior on social media, but this does not reduce the generality of the MFG framework since it is straightforward to employ flexible policy and value networks in a DDPG algorithm when intuition is not available \citep{silver2014deterministic,lillicrap2015continuous}.
Our joint distribution is $d$ instances of a $d$-dimensional Dirichlet distribution, each parameterized by an $\alpha^i \in \mathbb{R}^d_{+}$.
Each row $P_i$ is sampled from
\vspace{-1mm}
\begin{equation}
	f(P_{i1}, \dotsc, P_{id}; \alpha^i_1, \dotsc, \alpha^i_d) = \frac{1}{B(\alpha^i)} \prod_{j=1}^d (P_{ij})^{\alpha^i_j - 1}
\end{equation}
where $B(\cdot)$ is the Beta function and $\alpha^i_j$ is defined using the softplus function $\alpha^i_j(\pi,\theta) := \ln( 1 + \exp\lbrace \theta( \pi_j - \pi_i) \rbrace )$, which is a monotonically increasing function of the population density difference $\pi_j - \pi_i$.
In practice, a constant scaling factor $c \in \mathbb{R}$ can be applied to $\alpha$ for variance reduction.
Finally, we let $F(P^n; \pi^n, \theta) = \prod_{i=1}^d f( P^n_i; \alpha^i(\pi^n, \theta))$ denote the parameterized policy, from which $P^n$ is sampled based on $\pi^n$, and whose logarithmic gradient $\nabla_{\theta} \ln(F)$ can be used in a policy gradient algorithm.
We learned an approximate value function $\hat{V}(\pi; w)$ as a baseline for variance reduction, approximated as a linear combination of all polynomial features of $\pi$ up to second order, with parameter $w$ \citep{sutton2000policy}.

\section{Experiments}
\label{sec:experiment}
We demonstrate the effectiveness of our method with two sets of experiments: (i) inference of an interpretable reward function and (ii) prediction of population trajectory over time.
Our experiment matches the discrete time mean field game given in Section~\ref{sec:dtmfg}: we use data representing the activity of a Twitter population consisting of 406 users.
We model the evolution of the population distribution over $d=15$ topics and $N=16$ time steps (9am to midnight) each day for 27 days.
The sequence of state-action pairs $\lbrace (\pi^n, P^n) \rbrace_{n=0,\dotsc,N-1}$ measured on each day shall be called a \textit{demonstration trajectory}.
Although the set of topics differ semantically each day, indexing topics in order of decreasing initial popularity suffices for identifying the topic sets across all days.
As explained earlier, the MFG framework can model populations of arbitrarily large size, and we find that our chosen size is sufficient for 
extracting an informative reward and policy from the data.
For evaluating performance on trajectory prediction, we compare MFG with two baselines:\\
{\bf VAR.} Vector autoregression of order 18 trained on 21 demonstration trajectories. \\
{\bf RNN.} Recurrent neural network with a single fully-connected layer and rectifier nonlinearity. \\
We use Jenson-Shanon Divergence (JSD) as metric to report all our results. 
Appendix~\ref{appendix:experiment-details} provides comprehensive implementation details.

\subsection{Interpretation of reward function}


\begin{figure}[t!]
\begin{subfigure}{.33\textwidth}
  \centering
  \includegraphics[width=\linewidth, height=0.8\linewidth]{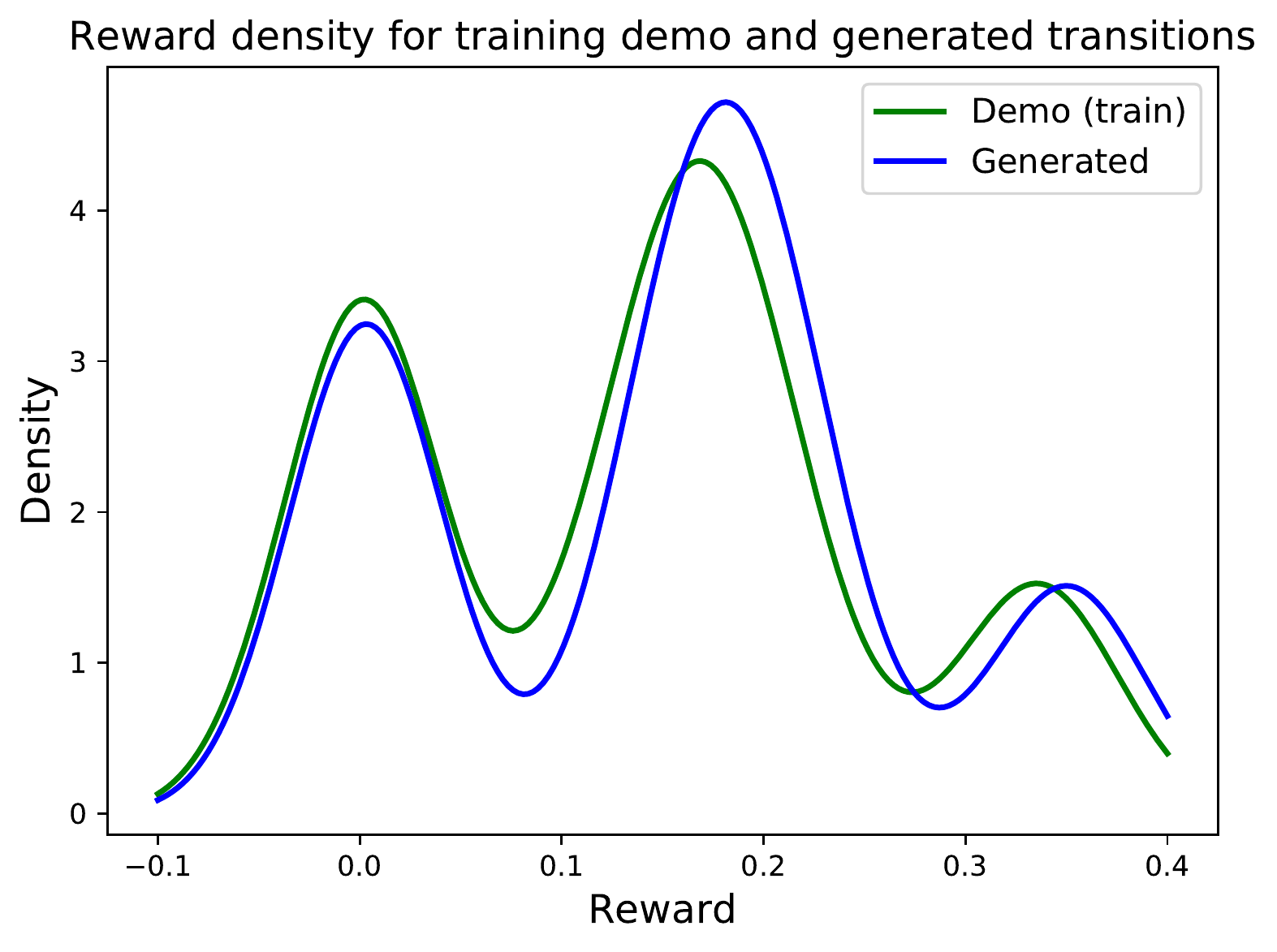}
  \caption{Reward densities on train set}
  \label{fig:reward_distribution_train}
\end{subfigure}
\begin{subfigure}{.33\textwidth}
  \centering
  \includegraphics[width=\linewidth, height=0.8\linewidth]{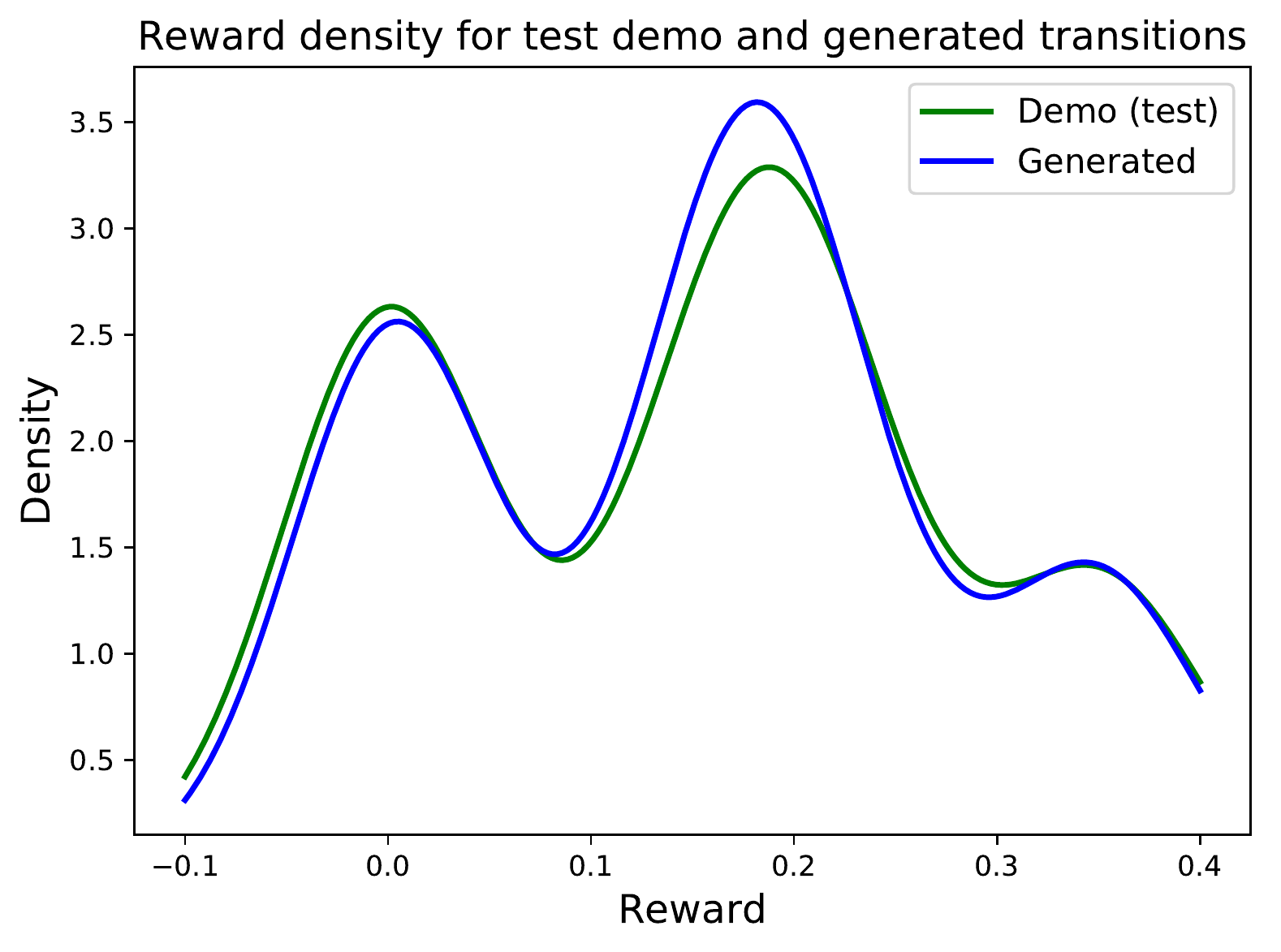}
  \caption{Reward densities on test set}
  \label{fig:reward_distribution_test}
\end{subfigure}
\begin{subfigure}{.33\textwidth}
  \centering
  \includegraphics[width=\linewidth, height=0.8\linewidth]{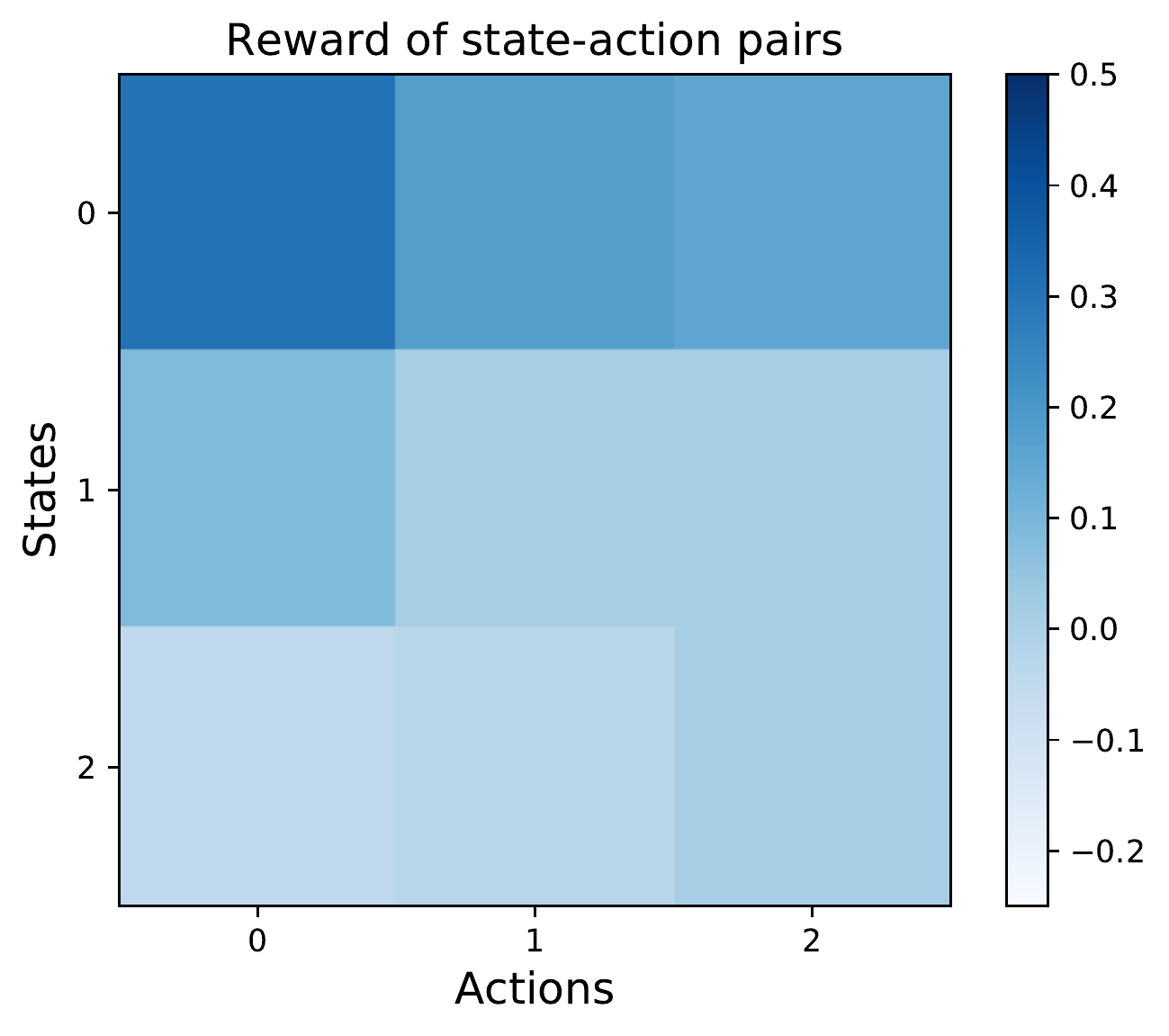}
  \caption{Reward of state-action pairs}
  \label{fig:reward_heatmap}
\end{subfigure}
\caption{(a) JSD between train demo and generated transitions is 0.130. (b) JSD between test demo and generated transitions is 0.017. (c) Reward of state-action pairs. States: large negative mass gradient from $\pi_1$ to $\pi_d$ (S0), less negative gradient (S1), uniform (S2). Actions: high probability transitions to smaller indices (A0), uniform transition (A1), row-reverse of A0 (A2).}
\label{fig:reward_distribution_heatmap}
\vspace{-0.3cm}
\end{figure}

We evaluated the reward using four sets of state-action pairs acquired from: 1. all train demo trajectories; 2. trajectories generated by the learned policy given initial states $\pi^0$ of train trajectories; 3. all test demo trajectories; 4. trajectories generated by the learned policy given initial states $\pi^0$ of test trajectories.
We find three distinct modes in the density of reward values for both the train group of sets 1 and 2 (Fig~\ref{fig:reward_distribution_train}) and the test group of sets 3 and 4 (Fig~\ref{fig:reward_distribution_test}).
Although we do not have access to a ground truth reward function, the low JSD values of 0.13 and 0.017 between reward distributions for demo and generated state-action pairs show generalizability of the learned reward function.
We further investigated the reward landscape with nine state-action pairs (Figure~\ref{fig:reward_heatmap}), and find that the mode with highest rewards is attained by pairing states that have large mass in topics having high initial popularity (S0) with action matrices that favor transition to topics with higher density (A0).
Uniformly distributed state vectors (S2) attain the lowest rewards, and states with a small negative mass gradient from topic 1 to topic $d$ (S1) attain medium rewards.
Simply put, MFG agents who optimize for this reward are more likely to move towards more popular topics.
While this numerical exploration of the reward reveals interpretable patterns, 
the connection between such rewards learned via our method and any optimization process in the population requires more empirical study.

\subsection{Trajectory prediction}


\begin{figure}[t!]
\begin{subfigure}{.4\textwidth}
  \centering
  \includegraphics[width=\linewidth]{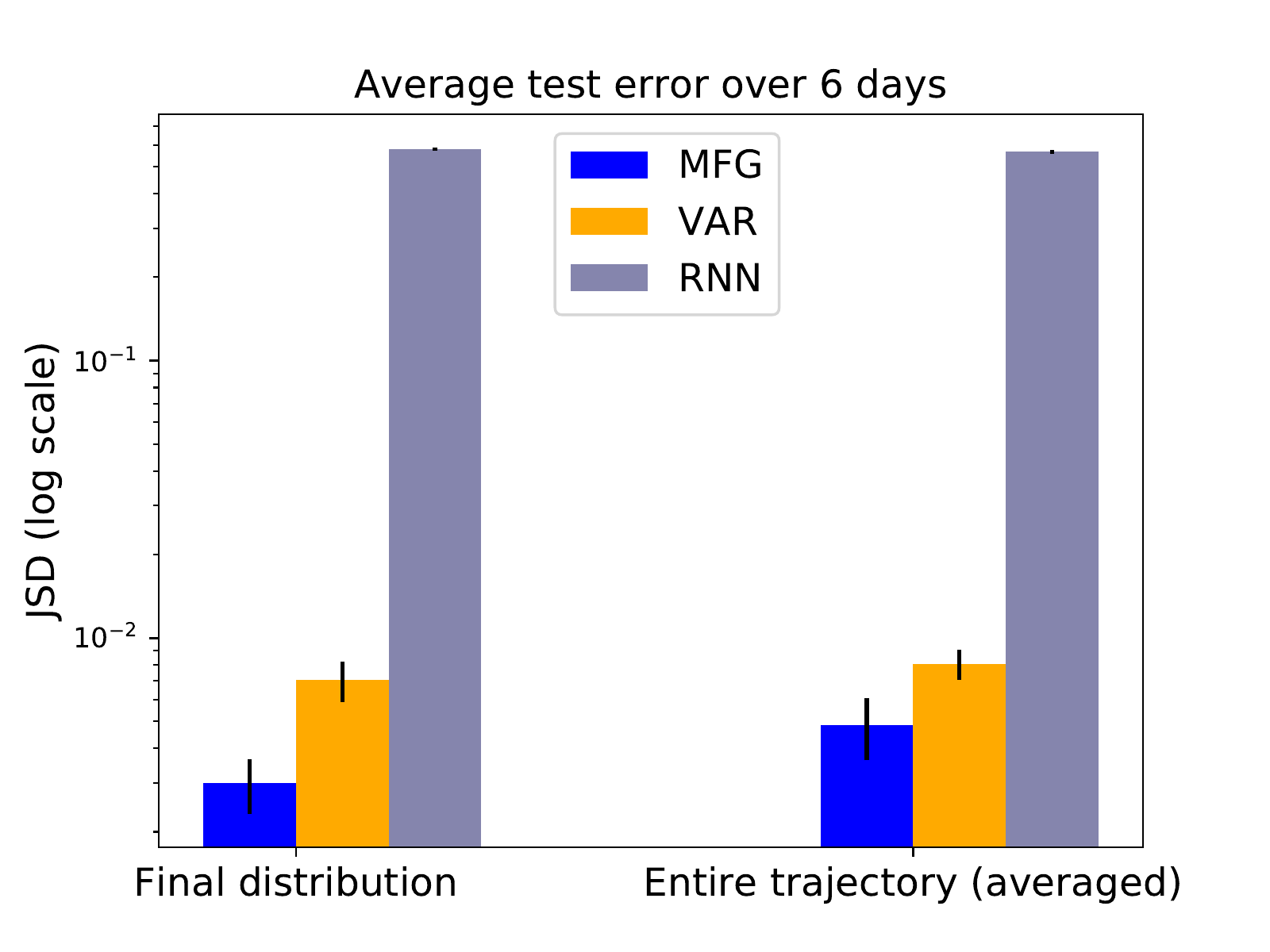}
  \caption{Prediction error}
  \label{fig:error}
\end{subfigure}%
\begin{subfigure}{.6\textwidth}
  \centering
  \includegraphics[width=\linewidth, height=0.5\linewidth]{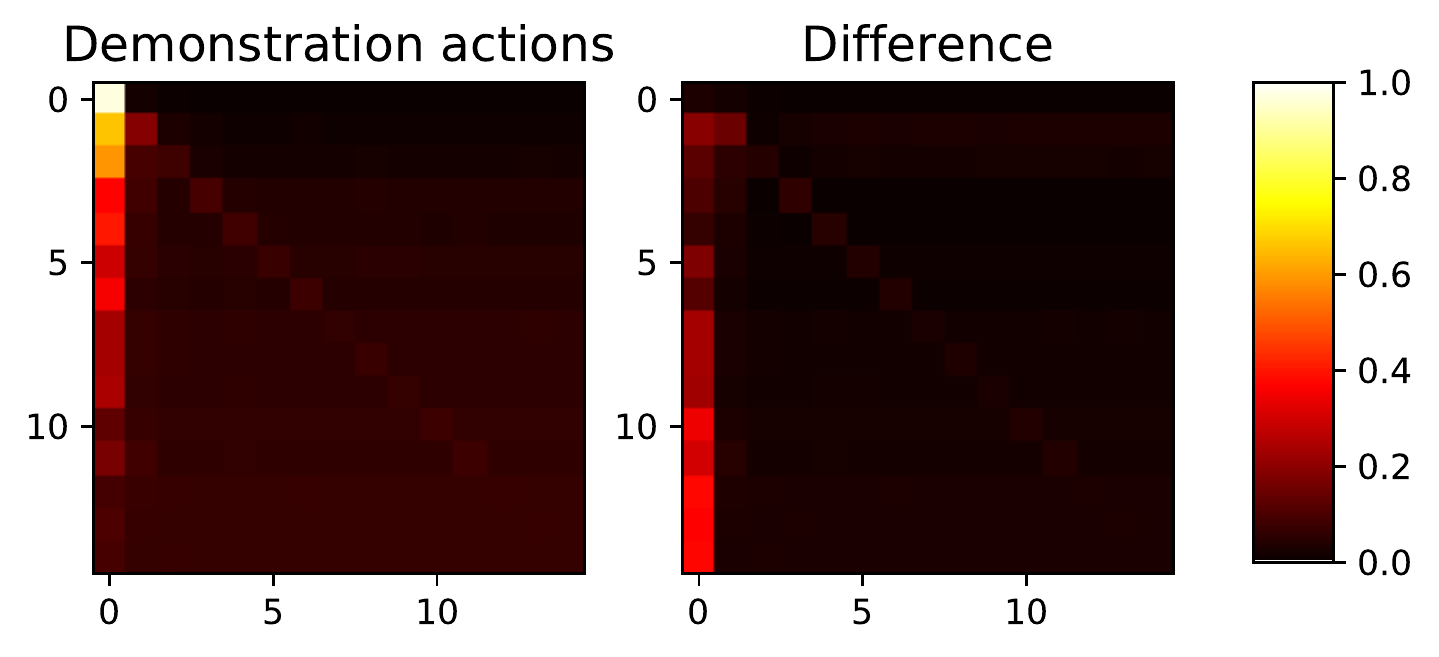}
  \caption{Action matrices}
  \label{fig:action_heatmap}
\end{subfigure}
\caption{(a) Test error on final distribution and mean over entire trajectory (log scale). MFG: (2.9e-3, 4.9e-3), VAR: (7.0e-3, 8.1e-3), RNN: (0.58, 0.57). (b) heatmap of action matrix $P \in \mathbb{R}^{15\times 15}$ averaged element-wise over demo train set, and absolute difference between average demo action matrix and average matrix generated from learned policy.}
\label{fig:error_traj_action_heatmap}
\vspace{-4mm}
\end{figure}

\begin{figure}[t!]
\begin{subfigure}{.5\textwidth}
  \centering
  \includegraphics[width=\linewidth]{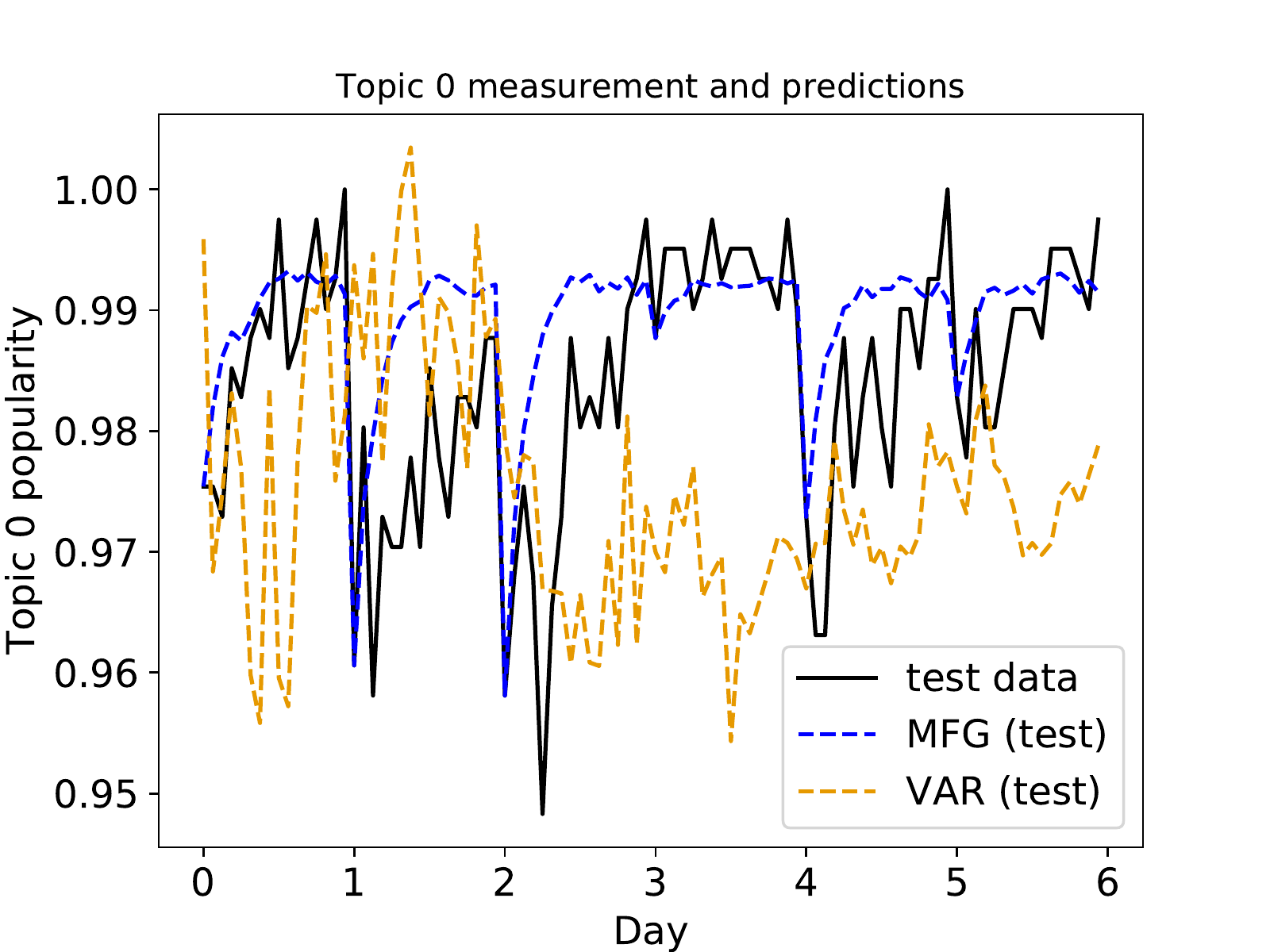}
  \caption{Topic 0 test trajectory}
  \label{fig:trajectory_0}
\end{subfigure}
\begin{subfigure}{.5\textwidth}
  \centering
  \includegraphics[width=\linewidth]{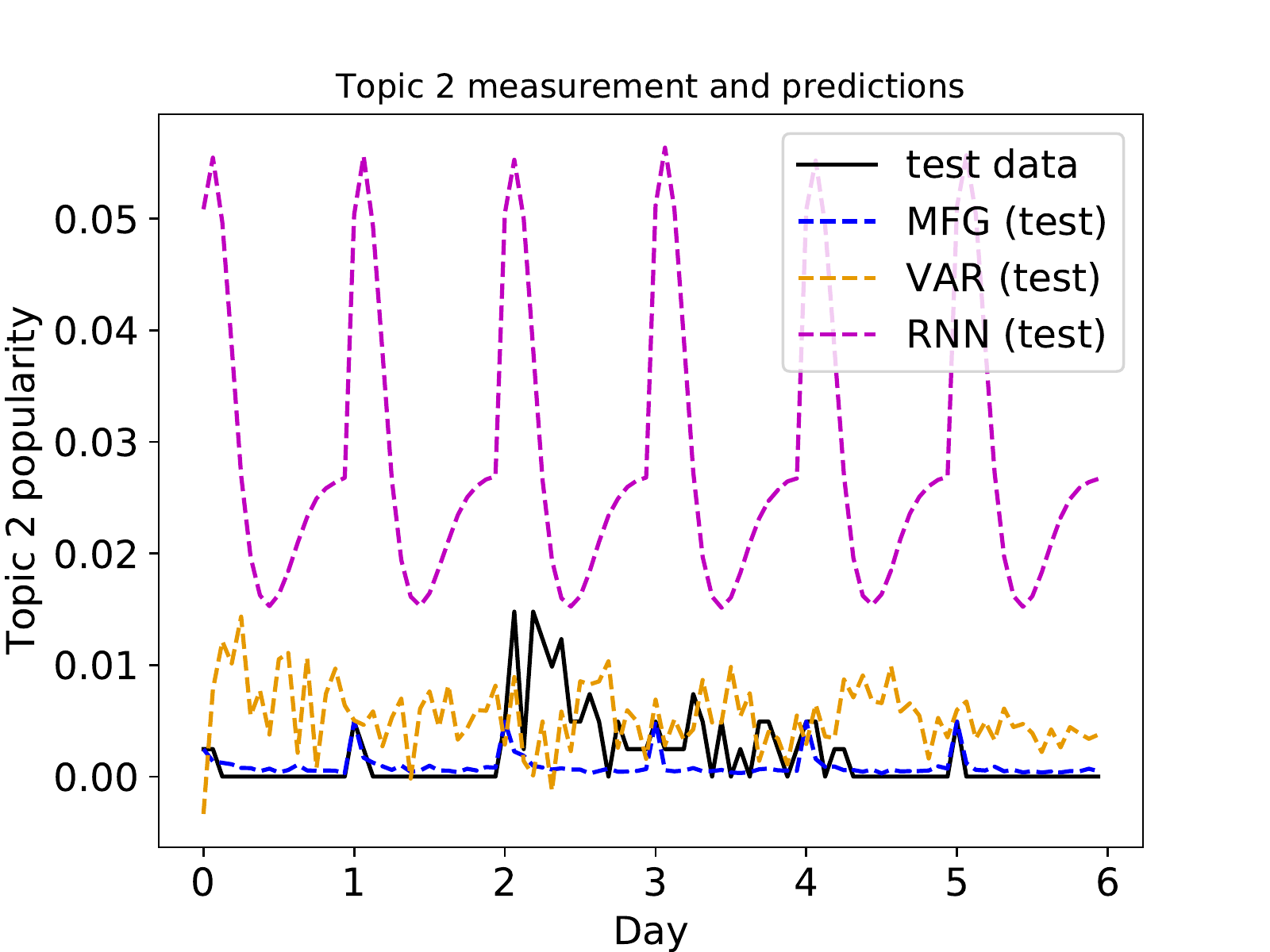}
  \caption{Topic 2 test trajectory}
  \label{fig:trajectory_2}
\end{subfigure}
\caption{(a) Measured and predicted trajectory of topic 0 popularity over test days for MFG and VAR (RNN outside range and not shown). (b) Measured and predicted trajectory of topic 2 popularity over test days for all methods.}
\label{fig:trajectories}
\vspace{-6mm}
\end{figure}

To test the usefulness of the reward and MFG model for prediction, the learned policy was used with the forward equation to generate complete trajectories, given initial distributions.
Fig~\ref{fig:error} (log scale) shows that MFG has $58\%$ smaller error than VAR when evaluated on the JSD between generated and measured \textit{final} distributions $\text{JSD}( \pi^{N-1}_{\text{generated}}, \pi^{N-1}_{\text{measured}})$, and $40\%$ smaller error when evaluated on the average JSD over all hours in a day $\frac{1}{N}\sum_{n=0}^{N-1} \text{JSD}( \pi^n_{\text{generated}}, \pi^n_{\text{measured}})$.
Both measures were averaged over $M=6$ held-out test trajectories.
It is worth emphasizing that learning the MFG model required \textit{only the initial population distribution} of each day in the training set (line 4 in Alg~\ref{alg:ac}), while VAR and RNN used the distributions over all hours of each day.
MFG achieves better prediction performance even with fewer training samples, possibly because it is a more structured approximation of the true mechanism underlying population dynamics, in contrast to VAR and RNN that rely on regression.
As shown by sample trajectories for topic 0 and 2 in Figures~\ref{fig:trajectories}, and the average transition matrices in Figure~\ref{fig:action_heatmap}, MFG correctly represents the fact that the real population tends to congregate to topics with higher initial popularity (lower topic indices), and that the popularity of topic 0 becomes more dominant across time in each day.
The small real-world dataset size, and the fact that RNN mainly learns state transitions without accounting for actions, could be contributing factors to the lower performance of RNN.
We acknowledge that our design of policy parameterization, although informed by domain knowledge, introduced bias and resulted in noticeable differences between demonstration and generated transition matrices.
This can be addressed using deep policy and value networks, since the MFG framework is agnostic towards choice of policy representation.


\section{Conclusion}

We have motivated and demonstrated a data-driven method to solve a mean field game model of population evolution, by proving a connection to Markov decision processes and building on methods in reinforcement learning.
Our method is scalable to arbitrarily large populations, because the MFG framework represents population density rather than individual agents, while the representations are linear in the number of MFG states and quadratic in the transition matrix.
Our experiments on real data show that MFG is a powerful framework for learning a reward and policy that can predict trajectories of a real world population more accurately than alternatives.
Even with a simple policy parameterization designed via some domain knowledge, our method attained superior performance on test data.
It motivates exploration of flexible neural networks for more complex applications.

An interesting extension is to develop an efficient method for solving the discrete time MFG in a more general setting, where the reward at each state $i$ is coupled to the full population transition matrix.
Our work also opens the path to a variety of real-world applications, such as a synthesis of MFG with models of social networks at the level of individual connections to construct a more complete model of social dynamics, and mean field models of interdependent systems that may display complex interactions via coupling through global states and reward functions.

\subsubsection*{Acknowledgments}

We sincerely thank our anonymous ICLR reviewers for critical feedback that helped us to improve the clarity and precision of our presentation.
This work was supported in part by NSF CMMI-1745382 and NSF IIS-1717916.

\bibliographystyle{iclr2018_conference}
{\small \bibliography{iclr2018_conference}}

\newpage
\appendix

\section{Proof of Propostion~\ref{prop:discretization}}
\label{appendix:derivation}
Given the definitions in Section~\ref{subsec:MFGnetwork}, a mean field game is defined by a Hamilton-Jacobi-Bellman (HJB) equation evolving backwards in time and a Fokker-Planck equation evolving forward in time.
The continuous-time Hamilton-Jacobi-Bellman (HJB) equation on $\Gcal$ is
\begin{equation}\label{eq:HJB1}
V_i'(t) = -\max\nolimits_{P_i} \left\{\sum\nolimits_{j\in \bar{\Vcal}_i^{-}}  P_{ij}(t)(V_j(t)-V_i(t))+r_i(\pi(t),P_i(t))\right\}
\end{equation}
where $r_i(\pi,P_i)$ is the reward function, and $V_i(t)$ is the value function of state $i$ at time $t$.
Note that the reward function $r_i(\pi(t),P_i(t))$ is often presented as $-c_i(\pi(t),P_i(t))$ for some cost function $c_i(\pi(t),P_i(t))$ in the MFG context, and similarly for $V_i(t)$.
In addition, we set $r_i(\pi(t),P_i(t))=-\infty$ if $P_i(t) \notin \Sbb_i(\Gcal)$
(i.e. $P(t)$ must be a valid transition matrix).
For any fixed $\pi(t)$, let $\mathcal{H}_i(\pi(t),\cdot)$ be the Legendre transform
of $c_i(\pi(t),\cdot)$ defined by 
\begin{equation}\label{eq:hamiltonian}
\mathcal{H}_i(\pi(t),\cdot)
=\max\nolimits_{P_{i}} \left\{\langle \cdot,P_i\rangle - c_i(\pi(t),P_i)\right\}
=\max\nolimits_{P_{i}} \left\{\langle \cdot,P_i\rangle + r_i(\pi(t),P_i)\right\}
\end{equation}
Then the HJB equation \eqref{eq:HJB1} is an analogue to the \textit{backward equation} in mean field games
\begin{equation}\label{eq:HJB2}
V_i'(t)+\mathcal{H}_i(\pi(t),[V_{j}(t)-V_i(t)]_{j\in \bar{\Vcal}_i^-})=0
\end{equation}
where $[V_{j}(t)-V_i(t)]_{j\in \bar{\Vcal}_i^-}\in\mathbb{R}^{|\bar{\Vcal}_i^-|}$ is
the dual variable of $P_i$. 
We can discretize \eqref{eq:HJB1} using a semi-implicit scheme with unit time step labeled by $n$ to obtain
\begin{equation}\label{eq:HJB3}
V_i^{n+1}-V_i^{n} = -\max\nolimits_{P_i} \left\{\sum\nolimits_{j\in \bar{\Vcal}_i^{-}}  
P_{ij}^n(V_j^{n+1}-V_i^{n+1})+r_i(\pi^n,P^n_i)\right\}
\end{equation}
Rearranging \eqref{eq:HJB3} yields the discrete time HJB equation over a graph \eqref{eq:hjb_discretization}
\begin{equation}\label{eq:hjb_discretization}
V_i^{n} = \max\nolimits_{P_i} 
\left\{r_i(\pi^n,P^n_i)+\sum\nolimits_{j\in \bar{\Vcal}_i^{-}} P_{ij}^n V_j^{n+1}\right\}
\end{equation}

The \textit{forward} evolving Fokker-Planck equation for the continuous-time graph-state MFG is given by  
\begin{align}\label{eq:forward}
\pi_i'(t)=&\sum\nolimits_{j\in \Vcal_i^+}Q_{ji}(t)\pi_j(t)-\sum\nolimits_{j\in \Vcal_i^-}Q_{ij}(t)\pi_i(t)\\
\mbox{where }\ & Q_{ji}(t)=\partial_{u_i} \Hcal_j(\pi(t),[V_k(t)-V_j(t)]_{k\in \bar{\Vcal}_j^{-}}) 
\end{align}
where $\partial_{u_i} \Hcal_j(\pi,u)$ is the partial derivative w.r.t. the coordinate
corresponding to the $i$-th index of the argument $u\in\mathbb{R}^{|\bar{\Vcal}_j^-|}$.
We can set $Q_{ji}(t)=0$ for all $(j,i)\notin \Ecal$, so that $Q(t):=[Q_{ji}(t)]$ can be regarded as 
the $d$-by-$d$ infinitesimal generator matrix of states $\pi(t)$,
and hence \eqref{eq:forward} can be written as $\pi'(t)=\pi(t)Q(t)$, where $\pi(t)\in\mathbb{R}^{d}$
is a row vector.
Then an Euler discretization of \eqref{eq:forward} with unit time step reduces to
$\pi^{n+1}-\pi^n=\pi^n Q^n$, which can be written as
\begin{equation}\label{eq:fp_discretization}
\pi_i^{n+1}=\sum\nolimits_{j\in \bar{\Vcal}_i^+}P_{ji}^{n}\pi_j^{n}
\end{equation}
where $P_{ij}^n := Q_{ij}^n+\delta_{ij}$.
If the graph $\Gcal$ is complete, meaning $\Ecal=\{(i,j):1\leq i,j\leq d\}$, then the summation is taken over $j=1,\dots,d$.
For ease of presentation, we only consider the complete graph in this paper, as all derivations can be carried out similarly for general directed graphs.
A solution of a mean field game defined by \eqref{eq:hjb_discretization} and \eqref{eq:fp_discretization} is a collection of $V^n_i$ and $\pi^n_i$ for $i=1,\dotsc,d$ and $n=0,\dotsc,N$.

\newpage
\section{Algorithms}
\label{appendix:algorithms}

We learn a reward function and policy using an adaptation of GCL \citep{finn2016guided} in Alg~\ref{alg:GCL} and a simple actor-critic Alg~\ref{alg:ac} \citep{sutton1998} as a forward RL solver.

\begin{algorithm}[H]
  \caption{Guided cost learning}
  \label{alg:GCL}
  \begin{algorithmic}[1]
    \Procedure{Guided cost learning}{}
    \State Initialize $F_0(P; \pi, \theta)$ as random policy and reward network weights $W^0$
    \For{iteration $1$ to $I$}
    	\State Generate sample trajectories $\Dcal_{\text{traj}}$ from $F_k(P; \pi, \theta)$
        \State $\Dcal_{\text{samp}} \leftarrow \Dcal_{\text{samp}} \cup \Dcal_{\text{traj}}$
		\While{$\left\lvert \text{Avg}_{\pi_i,P_i \sim \Dcal_{\text{demo}}} \left( R_{W^t}(\pi_i, P_i) - R_{W^{t-1}}(\pi_i, P_i) \right) \right\rvert > dR$}
    		\State Sample demonstration $\hat{\Dcal}_{\text{demo}} \subset \Dcal_{\text{demo}}$ from expert demonstration
        	\State Sample $\hat{\Dcal}_{\text{samp}} \subset \Dcal_{\text{samp}}$
        	\State $W^{t+1} \leftarrow W^t - \epsilon \nabla \Lcal(W^t)$ using $\hat{\Dcal}_{\text{demo}}$ and $\hat{\Dcal}_{\text{samp}}$
    \EndWhile
        \State Run Alg~\ref{alg:ac} with new $R_W$ for improved $F_{k+1}$
    \EndFor
    \State\Return Final reward function $R_W(\pi, P)$ and policy $F(P;\pi, \theta)$
    \EndProcedure
  \end{algorithmic}
\end{algorithm}

\begin{algorithm}[H]
  \caption{Actor-critic algorithm for MFG}
  \label{alg:ac}
  \begin{algorithmic}[1]
    \Require Generative model $F(P;\pi,\theta)$, value function $\hat{V}(\pi; w)$, training data $\lbrace \pi^0 \rbrace_{\text{M days}}$
    \Ensure Policy parameter $\theta$, value function parameter $w$
    \Procedure{actor-critic-MFG}{$F,\hat{V},\lbrace \pi^0 \rbrace_{\text{M days}},\beta,\xi,R_W$}
    \State initialize $\theta$ and $w$
    \For{episodes $s=1,\dotsc,S$}
    \State Sample initial distribution $\pi^0$ from $\lbrace \pi^0 \rbrace_{\text{M days}}$
    \For{time step $n = 0, \dotsc, N-1$}
    \State Sample action $P^n \sim F(P;\pi^n,\theta)$
    \State Generate $\pi^{n+1}$ using Eq~\ref{eq:forward_dtmfg}
    \State Receive reward $R_W(\pi^n, P^n)$
    \State $\delta \leftarrow R + \hat{V}(\pi^{n+1}; w) - \hat{V}(\pi^n; w)$
    \State $w \leftarrow w + \xi \delta \nabla_w \hat{V}(\pi^n; w)$
    \State $\theta \leftarrow \theta + \beta \delta \nabla_{\theta} \log(F(P;\pi^n,\theta))$
    \EndFor
    \EndFor
    \EndProcedure
  \end{algorithmic}
\end{algorithm}


\section{Reward network}
\label{appendix:reward}
Our reward network uses two convolutional layers to process the $15 \times 15$ action matrix $P$, which is then flattened and concatenated with the state vector $\pi$ and processed by two fully-connected layers regularized with L1 and L2 penalties and dropout (probability 0.6).
The first convolutional layer zero-pads the input into a $19 \times 19$ matrix and convolves one filter of kernel size $5\times 5$ with stride 1 and applies a rectifier nonlinearity.
The second convolutional layer zero-pads its input into a $17 \times 17$ matrix and convolves 2 filters of kernel size $3 \times 3$ with stride 1 and applies a rectifier nonlinearity.
The fully connected layers have 8 and 4 hidden rectifier units respectively, and the output is a single fully connected tanh unit.
All layers were initialized using the Xavier normal initializer in Tensorflow.

\section{Experiment details}
\label{appendix:experiment-details}

By default, Twitter users in a certain geographical region primarily see the trending topics specific to that region \citep{twitter2017}.
This experiment focused on the population and trending topics in the city of Atlanta in the U.S. state of Georgia.
First, a set of 406 active users were collected to form the fixed population.
This was done by collecting a set of high-visibility accounts in Atlanta (e.g. the Atlanta Falcons team), gathering all Twitter users who follow these accounts, filtering for those whose location was set to Atlanta, and filtering for those who responded to least two trending topics within four days.

Data collection proceeded as follows for $27$ days: at 9am of each day, a list of the top 14 trending topics on Twitter in Atlanta was recorded; for each hour until midnight, for each topic, the number of users who responded to the topic and the transition counts among topics within the past hour was recorded.
Whether or not a user responded to a topic was determined by checking for posts by the user containing unique words for that topic; the ``hashtag" convention of trending topics on Twitter reduces the likelihood of false positives.
The hourly count of people who did not respond to any topic was recorded as the count for a ``null topic''.
Although some users may respond to more than one topic within each hour, the data shows that this is negligible, and a shorter time interval can be used to reduce this effect.
The result of data collection is a set of trajectories, one trajectory per day, where each trajectory consists of hourly measurements of the population distribution over $d=15$ topics and their transition matrix over $N=16$ hours.

The training set consists of trajectories $\lbrace \pi^{0,m}, P^{0,m},\dotsc,P^{N-2,m},\pi^{N-1,m} \rbrace_{m=1,\dotsc,M}$ over the first $M=21$ days.
MFG uses the initial distribution $\pi^0$ of each day, along with the transition equation of the constructed MDP and the policy $F(P;\pi^n,\theta)$, to produce complete trajectories for training (Alg~\ref{alg:ac} lines 4,6,7).
In contrast, VAR and RNN are supervised learning methods and they use all measured distributions. 
RNN employs a simple recurrent unit with ReLU as nonlinear activation and weight matrix of dimension $d \times d$.
VAR was implemented using the Statsmodels module in Python, with order 18 selected via random sub-sampling validation with validation set size 5 \citep{seabold2010statsmodels}.
For prediction accuracy, all three methods were evaluated against data from 6 held-out test days.
Table~\ref{table:parameters} shows parameters of Alg~\ref{alg:ac} and~\ref{alg:GCL}.

\begin{table}[!h]
\vspace{-2mm}
\caption{Parameters}
\vspace{1mm}
\label{table:parameters}
\centering
	\begin{tabular}{llr}
	\toprule
	Parameter & Use & Value \\
	\midrule
	$S$ & max actor-critic episodes & 4000 \\
	$\beta$ & critic learning rate & $O(1/s)$ \\
	$\xi$ & actor learning rate & $O(1/s\ln\ln s)$ \\
	$c$ & $\alpha^i_j$ scaling factor & 1e4 \\
    $\epsilon$ & Adam optimizer learning rate for reward & 1e-4 \\
    $dR$ & convergence threshold for reward iteration & 1e-4 \\
    $\theta_{\text{final}}$ & learned policy parameter & 8.64 \\
	\bottomrule
	\end{tabular}
\vspace{-2mm}
\end{table}

\section{Maximum entropy distribution}
\label{appendix:maxent}
Given a finite set of trajectories $\lbrace \tau_i \rbrace_i$, where each trajectory is a sequence of state-action pairs $\tau_i = (s_{i1}, a_{i1},\dotsc,)$.
Suppose each trajectory $\tau_i$ has an unknown probability $p_i$.
The entropy of the probability distribution is $H = -\sum_{i} p_i \ln(p_i)$.
In the continuous case, we write the differential entropy:
\[ H = -\int p(\tau) \ln(p(\tau)) d\tau \]
where $p(\cdot)$ is the probability density we want to derive.
The constraints are:
\begin{align*}
\int r(\tau) p(\tau) d\tau &= \E[r(\tau)] = \mu_r \\
\int p(\tau) d\tau &= 1
\end{align*}
The first constraint says: the expected reward over all trajectories is equal to an empirical measurement $\mu_r$.
We write the Lagrangian $\Lcal$:
\[ \Lcal = -\int p(\tau) \ln(p(\tau)) d\tau - \lambda_1 \left( \int p(\tau) d\tau - 1 \right) - \lambda_2 \left( \int r(\tau) p(\tau) d\tau - \mu_r \right) \]
For $\Lcal$ to be stationary, the Euler-Lagrange equation with integrand denoted by $L$ says
\[ \pder[L]{p} = 0 \]
since $L$ does not depend on $\frac{dp}{d\tau}$.
Hence
\begin{align*}
\lambda_1 &= \ln\left( \int e^{-\lambda_2 r(\tau)} d\tau \right) - 1 \\
p(\tau) &= \exp\left( -\ln\left( \int e^{-\lambda_2 r(\tau)} d\tau \right) - \lambda_2 r(\tau) \right) = \frac{1}{Z(\lambda_2)} e^{-\lambda_2 r(\tau)}
\end{align*}
where $Z := \int e^{-\lambda_2 r(\tau)} d\tau$.
Then the constant $\lambda_2$ is determined by:
\begin{align*}
\mu_r &= \int p(\tau) r(\tau) d\tau = \frac{1}{Z(\lambda_2)} \int e^{-\lambda_2 r(\tau)} r(\tau) d\tau \\
&= - \pder{\lambda_2} \ln( Z(\lambda_2) )
\end{align*}

\section{Multiple importance sampling}
\label{appendix:MIS}

We show how multiple importance sampling \citep{owen2000safe} can be used to estimate the partition function in the maximum entropy IRL framework. The problem is to estimate $Z := \int f(x) dx$.
Let $p_1, \dotsc, p_m$ be $m$ proposal distributions, with $n_j$ samples from the $j$-th proposal distribution, so that samples can be denoted $X_{ij}$ for $i = 1,\dotsc,n_j$ and $j = 1,\dotsc,m$.
Let $w_j(x)$ for $j = 1,\dotsc,m$ satisfy
\[ 0 \leq w_j(x) \leq \sum_{j=1}^m w_j(x) = 1 \]
Then define the estimator
\[ \hat{Z} = \sum_{j=1}^m \frac{1}{n_j} \sum_{i=1}^{n_j} w_j(X_{ij}) \frac{f(X_{ij})}{p_j(X_{ij})} \]
Let $S(p_j) = \lbrace x \mid p_j(x) > 0 \rbrace$ be the support of $p_j$ and $S(w_j) = \lbrace x \mid w_j(x) > 0 \rbrace$ be the support of $w_j$, and let them satisfy $S(w_j) \subset S(p_j)$.
Under these assumptions:
\[ \E[\hat{Z}] = \int f(x) dx = Z \]
In particular, choose
\[ w_j(x) := \frac{n_j p_j(x)}{\sum_{k=1}^m n_k p_k(x)} \]
Then the estimate becomes
\begin{align*}
\hat{Z} &= \sum_{j=1}^m \sum_{i=1}^{n_j} \frac{f(X_{ji})}{ \sum_{k=1}^m n_k p_k(x) } \\
	&= \frac{1}{n} \sum_{j=1}^m \sum_{i=1}^{n_j} \frac{ f(X_{ji}) }{ \sum_{k=1}^m \frac{n_k}{n} p_k(x) }
\end{align*}
where $n = \sum_{j=1}^m n_j$ is the total count of samples.
Further assuming that samples are drawn uniformly from all proposal distributions, so that $n_j = n_k = n/m$ for all $j,k \in \lbrace 1,\dotsc,m \rbrace$, the expression for $\hat{Z}$ reduces to the form used in Eq~\ref{eq:loss}:
\[ \hat{Z} = \frac{1}{n} \sum_{\text{all samples}} \frac{f(x)}{ \frac{1}{m} \sum_{k=1}^m p_k(x) } \]

\section{A comparison of mean field games and multi-agent MDPs}
\label{appendix:comparison}
\normalsize
In this section, we discuss the reason that the general MFG, whose reward function $r_{ij}(\pi^n, P^n)$ depends on the full Nash maximizer matrix $P^n$, is neither reducible to a collection of distinct single-agent MDPs nor equivalent to a multi-agent MDP.
Let a state in the discete space MFG be called a ``topic'', to avoid confounding with an MDP state.

\subsection{Collection of single-agent MDPs}
Consider each topic $i$ as a separate entity associated with a value, rather than subsuming it into an average (as is the case in Section~\ref{sec:solution}).
In order to assign a value to each topic, each tuple $(i, \pi^n)$ must be defined as a state, which leads to the problem: since a state requires specification of $\pi^n$, and state transitions depend on the actions for all other topics, the action at each topic is not sufficient for fully specifying the next state.
More formally, consider a value function on the state:
\begin{align}\label{eq:app_1}
  V(i, \pi^n) &= \max_{q \in \Sbb_i(\Gcal)} \biggl\lbrace \sum_j q_j r_{ij}(\pi^n, \mathcal{P}(P^n, i, q)) + \sum_j q_j V(j, (P^n)^T \pi^n) \biggr\rbrace
\end{align}
Superficially, this resembles the Bellman optimality equation for the value function in a single-agent stochastic MDP, where $s$ is a state, $a$ is an action, $R$ is an immediate reward, and $P(s'|s,a)$ is the probability of transition to state $s'$ from state $s$, given action $a$:
\begin{align}\label{eq:app_2}
  V^*(s) = \max_a \lbrace R(s,a) + \sum_{s'} P(s' | s, a) V^*(s') \rbrace
\end{align}
In equation~\ref{eq:app_1}, $q_j$ can be interpreted as a transition probability, conditioned on the fact that the current topic is $i$.
The action $q$ selected in the state $(i, \pi^n)$ induces a stochastic transition to a next topic $j$, but the next distribution $\pi^{n+1}$ is given by the deterministic forward equation $\pi^{n+1} = (P^n)^T \pi^n$, where $P^n$ is the true Nash maximizer matrix. 
This means that $q_j$ does not completely specify the next state $(j, \pi^{n+1})$, and there is a formal difference between $P(s'|s,a)V^*(s')$ and $q_j V(j, (P^n)^T \pi^n)$. 
Also notice that the Bellman equation sums over all possible next states $s'$,
but equation~\ref{eq:app_1} only sums over topics $j$ rather than full states $(j, \pi)$.

\subsection{Multi-agent MDP}

Short of modeling every single agent in the MFG, an exact reduction from the MFG to a multi-agent MDP (i.e. Markov game) is not possible.
A discrete state space discrete action space multi-agent MDP is defined by $d$ agents moving within a set $S$ of environment states; a collection $\lbrace A_1, \dotsc, A_d \rbrace$ of action spaces; a transition function $P(s' | s, a_1, \dotsc, a_d)$ giving the probability of the environment transitioning from current state $s$ to next state $s'$, given that agents choose actions $\bar{a} := (a_1, \dotsc, a_d)$; a collection of reward functions $\lbrace R_i(s, a_1, \dotsc, a_d) \rbrace_i$; and a discount factor $\gamma$.


Let the set of $\pi^n$ (with appropriate discretization) be the state space and limit the set of actions to some discretization of the simplex.
The alternative to modeling individual MFG agents is to consider each topic as a single ``agent''.
Now, the agent representing topic $i$ is no longer identified with the set of people who selected topic $i$: 
topics have fixed labels for all time, so an agent can only accumulate reward for a single topic, whereas people in the MFG can move among topics.
Therefore, the value function for agent $i$ in a Markov game is defined only in terms of itself, never depending on the value function of agents $j \neq i$:
\begin{align}
  V^{\mu}_i(s) &= \sum_{\bar{a}} \prod_j \mu_j(\bar{a}_j|s) \left( R_i(s,\bar{a}) + \gamma \sum_{s'} P(s' | s, \bar{a}) V^{\mu}_i(s') \right)
\end{align}
where $\mu := (\mu_1, \dotsc, \mu_d)$ is a set of stationary policies of all agents.
However, recall that the MFG equation for $V_i^n$ explicitly depends on $V^{n+1}_j$ of all topics $j$,
which would require a different form such as the following:
\begin{align}
  V^{\mu}_i(\pi^n) &= \sum_{P \in \Sbb(\Gcal)} \prod_{j=1}^d \mu_j(P_j | \pi^n) \left( \sum_k P_{ik} r_{ik}(\pi^n, P) + \sum_k P_{ik} V_k^{\mu}( P^T \pi^n) \right)
\end{align}
where the last terms sums over value functions $V_k^{\mu}$ for all topics $k$.
This mixing between value functions prevents a reduction from the MFG to a standard Markov game.

\end{document}